\documentclass[conference]{IEEEtran}
\usepackage[left=0.68in, right=0.68in, bottom=1.03in, top=0.7in]{geometry}
\usepackage[ruled,vlined]{algorithm2e}

\usepackage{amsmath}
\usepackage{amsthm}
\usepackage{amsfonts}
\usepackage{amssymb}
\usepackage{bm}
\usepackage{boxedminipage}
\usepackage{cases}
\usepackage{array,color}
\usepackage{colortbl}
\usepackage{enumerate}
\usepackage{epsfig}
\usepackage{extarrows}
\usepackage{xcolor}
\usepackage{pifont}
\usepackage{bbm} 
\usepackage{cite}

\usepackage{booktabs}
\usepackage{multirow}
\usepackage{makecell}

\usepackage{graphicx}
\usepackage{subfigure}
\usepackage{dblfloatfix}
\usepackage{cuted} 
\graphicspath{{figures/}}

\newtheorem{theorem}{\textbf{Theorem}}

\newtheorem{lemma}{\textbf{Lemma}}

\newtheorem{remark}{\textbf{Remark}}

\newtheorem{assumption}{\textbf{Assumption}}

\newcommand{\etal}{\textit{et al. }}

\newcommand{\E}[1]{\mathbb{E}\left[#1\right]}
\newcommand{\norm}[1]{\left\|#1\right\|^2}
\newcommand{\inner}[2]{\left\langle #1, #2 \right\rangle}


\begin{document}
\IEEEoverridecommandlockouts

\title{EdgeFLow: Serverless Federated Learning\\via Sequential Model Migration in Edge Networks}

\author{\IEEEauthorblockN{
            Yuchen Shi\IEEEauthorrefmark{2}, Qijun Hou\IEEEauthorrefmark{2}, Pingyi Fan\IEEEauthorrefmark{1}\IEEEauthorrefmark{2},~\IEEEmembership{Senior Member,~IEEE,} and Khaled B. Letaief\IEEEauthorrefmark{4},~\IEEEmembership{Fellow,~IEEE} }
        
	\IEEEauthorblockA{
            \IEEEauthorrefmark{2}Dept. of Electronic Engineering, BNRist, Tsinghua University.\\
            \IEEEauthorrefmark{4}Dept. of Electronic and Computer Engineering, HKUST.}
    
        \thanks{\IEEEauthorrefmark{1}Pingyi Fan is the corresponding author, e-mail: fpy@tsinghua.edu.cn. This work was supported by the National Key Research and Development Program of China (Grant NO.2021YFA1000500(4)).}
        \thanks{\IEEEauthorrefmark{4}Khaled B. Letaief was supported in part by Hong Kong Research Grants Council under the Areas of Excellence scheme grant AoE/E-601/22-R.}
        }

\maketitle
\IEEEpeerreviewmaketitle

\begin{abstract}
    Federated Learning (FL) has emerged as a transformative distributed learning paradigm in the era of Internet of Things (IoT), reconceptualizing data processing methodologies. However, FL systems face significant communication bottlenecks due to inevitable client-server data exchanges and long-distance transmissions. This work presents EdgeFLow, an innovative FL framework that redesigns the system topology by replacing traditional cloud servers with sequential model migration between edge base stations. By conducting model aggregation and propagation exclusively at edge clusters, EdgeFLow eliminates cloud-based transmissions and substantially reduces global communication overhead. We provide rigorous convergence analysis for EdgeFLow under non-convex objectives and non-IID data distributions, extending classical FL convergence theory. Experimental results across various configurations validate the theoretical analysis, demonstrating that EdgeFLow achieves comparable accuracy improvements while significantly reducing communication costs. As a systemic architectural innovation for communication-efficient FL, EdgeFLow establishes a foundational framework for future developments in IoT and edge-network learning systems.
\end{abstract}

\begin{IEEEkeywords}
federated learning, internet of things, communication efficiency, edge network.
\end{IEEEkeywords}

\section{Introduction}
\IEEEPARstart{T}{he} rapid advancement of Internet of Things (IoT) has led to exponential growth in edge-generated data and sensor network deployments, presenting fundamental challenges to traditional cloud-centric architectures. Therefore, developing efficient privacy-preserving mechanisms becomes imperative to enable secure and scalable distributed Artificial Intelligence (AI) across IoT ecosystems. Federated Learning (FL) has emerged as a transformative solution to this challenge, representing a paradigm-shifting approach that addresses these requirements by revolutionizing how distributed IoT data is processed and analyzed. First proposed by McMahan \etal in 2016 \cite{mcmahan2017communication}, FL is defined as a machine learning technique that enables multiple distributed clients to collaboratively train a model under the coordination of a central server, while keeping the data localized. With AI-enabled communication networks becoming a cornerstone of 6G systems \cite{letaief2019roadmap}, FL has demonstrated its significant potential in wireless communication, mobile edge computing and IoT ecosystem.

Unlike centralized learning that requires data pooling for model training, FL operates through the exchange of model parameters while maintaining data locality, consequently introducing non-negligible communication overhead in the learning system. In fact, communication efficiency has been widely identified as a critical bottleneck in FL systems, primarily due to the constrained and unreliable wireless links of local devices and sensors. Moreover, the intrinsic heterogeneity of both local data (non-IID distributions) and device capabilities in FL environments can significantly prolong training convergence and increase transmission latency. Furthermore, as the central server is typically not co-located with local devices, training results must traverse multiple edge nodes (base stations) via multi-hop routing before reaching the server. Although edge base stations and cloud servers are normally connected through high-speed backbone networks, the recurrent uploading and downloading of massive model parameters inevitably creates substantial packet queue load pressure.

To tackle this challenge, researchers have identified effective methods to reduce transmission volume per communication round, primarily through two strategies: 1) information compression via model selection \cite{nishio2019client, shi2024sam}, pruning \cite{zhu2023fedlp,jiang2022model} and quantization \cite{zhu2023towards}, and 2) efficient data utilization that trades computation resources for communication savings through techniques like local sampling \cite{karimireddy2020scaffold, zhu2024isfl}, coded computing \cite{shi2023fednc} and knowledge distillation \cite{li2019fedmd}. Alternatively, another line of research focuses on reducing communication overhead through network topology-aware aggregation mechanisms. A prominent example is Hierarchical FL \cite{liu2022hierarchical}, where geographically dispersed clients communicate with the cloud server through proximate edge nodes. Hierarchical FL organizes clients into multiple clusters, each managed by a dedicated edge node. During each communication round, the edge node first performs intra-cluster model aggregation, then transmits only the cluster-level model to the central server for global aggregation. This hierarchical approach eliminates the need for direct client-to-cloud transmissions, thereby reducing communication hops in the network.

However, even with Hierarchical FL which reduces communication frequency, the communication load between local devices and the cloud server persists. This challenge has driven the emergence of Sequential FL \cite{li2023convergence}, which eliminates the cloud server from the FL system, transforming it into a fully decentralized peer-to-peer (P2P) framework \cite{lalitha2018fully}. While it significantly reduces communication costs, the absence of parallel training introduces notable challenges including distribution shift and catastrophic forgetting \cite{kirkpatrick2017overcoming}. Some studies have attempted to integrate sequential training paradigms into hierarchical architectures by removing the cloud server and maintaining edge-based cluster training (rather than individual clients processing) \cite{yan2024sequential}, though these explorations remain limited in scope. Inspired by these developments, we propose the EdgeFLow framework, which replaces traditional cloud servers with sequential model migration flows across edge networks, effectively addressing the communication challenges.

The main contributions are summarized as follows:
\begin{itemize}
    \item We introduce the sequential model migration flows across edge networks into FL to replace traditional cloud servers and put forward EdgeFLow framework to mitigate the communication bottleneck challenge.
    \item We derive the convergence theorem for EdgeFLow by accounting for non-convex objective functions and non-IID local data distributions. The theoretical result extends the applicability of classical FL convergence analysis.
    \item We develop the algorithm and conduct various experiments to evaluate the performance of EdgeFLow. The outcomes demonstrate reasonable accuracy improvement alongside significant communication overhead reduction.
\end{itemize}

The rest of the paper is organized as follows. In Section \ref{sec:system}, we formalize the underlying problem in FL, present the core concept of EdgeFLow, and develop its algorithmic framework. In Section \ref{sec:theo}, we derive the convergence theorem with discussions. Section \ref{sec:exp} concentrates on the performance of the algorithm via numerical experiments, and finally, Section \ref{sec:con} concludes this work.

\section{System Model} \label{sec:system}
In this section, we first introduce the problem setup of FL system, then sketch the basic idea and the framework of EdgeFLow.

\subsection{Problem Setup}
Consider an FL system with $N$ clients, each with a fixed local data set $\mathcal{D}_n$, the basic FL problem is to minimize the global objective function, i.e.,
\begin{equation}
    \min_{\theta \in \mathbb{R}^d} \left\{ F(\theta) \triangleq \frac{1}{N} \sum_{n=1}^{N} f_n(\theta) \right\}
\end{equation}
where $F(\cdot)$, $f_n(\cdot)$, and $\theta$ denote the global objective function, the local objective function and the global model, respectively. In particular, when employing \textit{Stochastic Gradient Descent} that utilizes a randomly sampled mini-batch $\xi\subset\mathcal{D}_n$ per iteration, we typically consider $f_n(\theta)=\mathbb{E}_{\xi} \left[ \ell (\theta; \xi) \right]$, where $\ell (\cdot)$ is the loss function. 

In the traditional FL framework such as \textit{FedAvg} \cite{mcmahan2017communication}, during each communication round, a subset of clients is randomly selected from the $N$ total clients. These selected clients first replace their local models with the global model, then perform $K$ steps of local updates, and finally upload their respective local models to the central server for aggregation, generating a new global model. With SGD as the local optimizer, the local update rule follows:
\begin{equation} \label{eq:localupdate}
    \theta_{n,k+1}^{t} \gets \theta_{n,k}^{t} - \eta \tilde{g}_{n,k}^t
\end{equation}
where $t$, $k$, $n$ denote the number of communication round, local training step and the index of client, $\eta$ denotes the learning rate, and $\tilde{g}_{n,k}^t \triangleq \nabla \ell (\theta_{n,k}^{t}; \xi)$ denotes the stochastic gradient, respectively.

In conventional FL, frequent parameter exchanges between central servers and local clients are required. Given that modern model parameters are generally enormous in size as they scale proportionally with performance, this paradigm incurs substantial communication overhead. Consequently, reducing the massive data interactions between local devices and cloud server constitutes a critical solution to FL's communication bottleneck.

\begin{figure}[tbp]
    \centering
    \includegraphics[width=0.8\columnwidth]{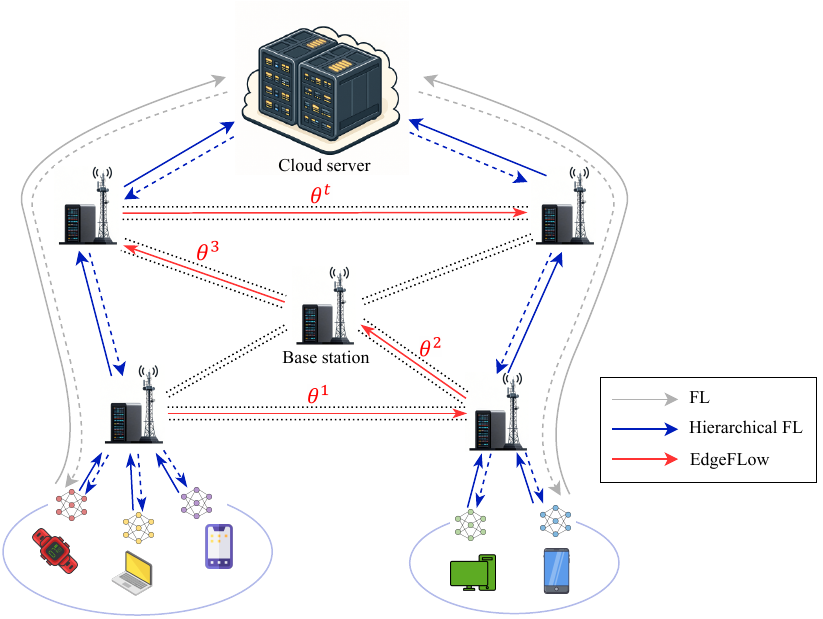}
    \caption{The comparison between FL, Hierarchical FL and EdgeFLow.} 
    \vspace{-0.5cm}
    \label{fig:edgeflow}
\end{figure}

\subsection{EdgeFLow Framework}
In traditional FL, locally trained models are uploaded to a cloud server for aggregation. However, there is no direct physical channel between local clients and the cloud server. Parameters are first transmitted to the nearest edge base station and then relayed through multiple hops before reaching the cloud. In practice, while local clients are geographically close to their edge base stations, the distance between base stations and the central server can be substantial, often spanning different cities or even countries.

In this work, we propose EdgeFLow, a novel approach that replaces the central cloud server with sequential model migration across edge networks. Technically, the framework operates in three phases:
\begin{enumerate}
    \item \textbf{Cluster Initialization}: Clients are dynamically grouped into fixed localized clusters, each cluster is anchored to an edge base station;
    \item \textbf{Intra-Cluster Training}: In each round, there is one cluster being active, within which all clients collaboratively train a model using their local data sets. Upon completing local training, the active base station performs model aggregation to update the global model;
    \item \textbf{Inter-Cluster Model Migration}: After completing the current round, the updated model directly migrates to the next scheduled cluster, where it is distributed to local clients for subsequent training iterations.
\end{enumerate}

Specifically, the $N$ clients in the FL system are partitioned into $M$ clusters. For the $t$-th round, let $m(t)$ denote the participating cluster, with its participating clients forming the set $\mathcal{N}_{m(t)}$, where the cardinality $|\mathcal{N}_{m(t)}|=N_{m(t)}$. Within each round, every client $n \in \mathcal{N}_{m(t)}$ conducts $K$ steps of local training to minimize its objective function $f_n(\theta_{n,k}^t)$, then uploads its parameter $\theta_{n,K}^t$ to the associated base station for aggregation. The base station updates the global model by
\begin{equation} \label{eq:globalupdate}
    \theta^{t+1}-\theta^t = -\frac{\eta}{N_{m(t)}} \sum_{n \in \mathcal{N}_{m(t)}} \sum_{k=0}^{K-1} \tilde{g}_{n,k}^t
\end{equation}
and then propagates it to the next scheduled edge node. The complete procedure is detailed in Algorithm 1.
\begin{algorithm}[htbp]
    \SetAlgoLined
    \KwIn{Local epoch $K$, global round $T$, learning rate $\eta$}
    \textbf{Initialization}\;
    \For{round $t=0, 1, \cdots, T-1$}{
        \For{each client $n\in \mathcal{N}_{m(t)}$ \textbf{in parallel}}{
            Download the global model $\theta^t$\;
            \For{step $k=0, 1, \cdots, K-1$}{
                Randomly sample a mini-batch $\xi\subset\mathcal{D}_n$\;
                Local training by Eq. \eqref{eq:localupdate}\;
                }
            Upload the model $\theta_{n, K}^t$ to the base station\;
        }
        the \textit{base station} \textbf{do}:\\
        Aggregate and update the global model by Eq. \eqref{eq:globalupdate}\;
        \uIf{$t<T-1$}{
            Select the next cluster $m(t+1)$ and transfer the model $\theta^{t+1}$ to the next base station\;
            }
        \lElse{\textbf{return} global model $\theta^{T}$}      
    }
    \caption{$\mathtt{EdgeFLow}$} \label{algo} 
\end{algorithm}

This edge-to-edge transfer forms a serverless model migration, bypassing the cloud entirely. The approach eliminates the costly transmission between edge nodes and a distant central server, as models now flow directly between edge base stations. 
Therefore, EdgeFLow mechanism effectively replaces the traditional cloud server's role while significantly reducing communication overhead.

\section{Theoretical Analysis} \label{sec:theo}
In this section, we establish the convergence guarantees for EdgeFLow under non-convex local objectives with non-IID data distributions, followed by a discussion of its physical interpretations and theoretical connections to classical convergence analysis.

First, we state the following assumptions that are commonly employed in convergence analyses within the existing literature. In particular, we relax the strongly or general convexity assumption on the local objective function, only assuming that each objective to be L-smooth in the non-convex case (Assumption \ref{ass:lsmooth}). Furthermore, we need to bound the norm and variance of the stochastic gradients to constrain the diversity during gradient descent (Assumption \ref{ass:gradient}). Finally, unlike existing works that often regulate the degree of non-IIDness in local data \cite{khaled2019first,li2019convergence}, we impose constraints on the heterogeneity within each cluster (Assumption \ref{ass:niid}). For analytical purpose, we define the cluster-level objective function $F_{m(t)}(\cdot)\triangleq \frac{1}{N_{m(t)}}\sum_{n\in\mathcal{N}_{m(t)}} f_n(\cdot)$.

\begin{assumption}[L-smoothness] \label{ass:lsmooth}
    The objective function $F(\cdot)$ is L-smooth, then there exists a Lipschitz constant $L$ such that
    \begin{equation}
        F(\theta) - F(\theta') \leq \inner{\nabla F(\theta') }{\theta-\theta'} + \frac{L}{2} \norm{\theta-\theta'}
    \end{equation} 
\end{assumption}

\begin{assumption}[Bounded Stochastic Gradient] \label{ass:gradient}
    The norm and variance of the stochastic gradients are bounded by $G^2$ and $\sigma^2$, respectively, to constrain their magnitude and variability.
    \begin{align}
        &\E{\norm{\tilde{g}_{n,k}^t}} \leq G^2 \\
        &\E{\norm{\tilde{g}_{n,k}^t - \E{\tilde{g}_{n,k}^t}}} \leq \sigma^2 
    \end{align}
\end{assumption}

\begin{assumption}[Heterogeneity] \label{ass:niid}
    The heterogeneity between the global data set and each cluster's data set is bounded by $\lambda_{m(t)}^2$, i.e.,
    \begin{equation}
        \E{\norm{\nabla F(\theta^t) - \nabla F_{m(t)}(\theta^t)}} \leq \lambda_{m(t)}^2
    \end{equation}
\end{assumption}

With the above assumptions, we are now ready to investigate the main theorem of the convergence analysis for EdgeFLow.
\begin{theorem} \label{th:conv}
    For EdgeFLow in Algorithm \ref{algo} with local epoch $K$, cluster size $N_{m(t)}$, and a given time range $T$, if the learning rate $\eta$ satisfies $LK\eta <1$, and the optimum of the expected objective function on the average model is denoted as $F^*$, then under the Assumption \ref{ass:lsmooth}-\ref{ass:niid}, the expectation of the squared gradient norm can be bounded by:
    \begin{equation}
        \begin{split}
            \frac{1}{T} \sum_{t=0}^{T-1} &\E{\norm{\nabla F(\theta^t)}} \leq \frac{4}{K\eta T}\left(F(\theta^0) - F^*\right) \\
            &+ \frac{2}{T} \sum_{t=0}^{T-1} \lambda_{m(t)}^2 + \frac{2}{T} \sum_{t=0}^{T-1} \frac{L\eta\sigma^2}{N_{m(t)}} + \frac{4L^2K^2\eta^2G^2}{3}
            \label{eq:th1}
        \end{split}
    \end{equation}
\end{theorem}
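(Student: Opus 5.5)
The plan is the standard one-round descent argument for local SGD, applied to the single active cluster $m(t)$ and followed by a telescoping sum over $t$.

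\textbf{Descent inequality.} Apply Assumption \ref{ass:lsmooth} with $\theta=\theta^{t+1}$ and $\theta'=\theta^t$. Write the aggregated update \eqref{eq:globalupdate} as $\theta^{t+1}-\theta^t=-\eta\Delta^t$, where
\[
\Delta^t=\frac{1}{N_{m(t)}}\sum_{n\in\mathcal{N}_{m(t)}}\sum_{k=0}^{K-1}\tilde{g}_{n,k}^t .
\]
This gives
\[
\E{F(\theta^{t+1})}\le \E{F(\theta^t)}-\eta\E{\inner{\nabla F(\theta^t)}{\Delta^t}}+\frac{L\eta^2}{2}\E{\norm{\Delta^t}} .
\]

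\textbf{Inner-product term.} Replace $\Delta^t$ by its conditional mean $\sum_k \frac{1}{N_{m(t)}}\sum_n \nabla f_n(\theta^t_{n,k})$. Then split each summand into the gradient at the global model plus a drift error:
\[
\nabla f_n(\theta^t_{n,k})=\nabla f_n(\theta^t)+\bigl(\nabla f_n(\theta^t_{n,k})-\nabla f_n(\theta^t)\bigr).
\]
Averaged over the cluster, the first part is $\nabla F_{m(t)}(\theta^t)$. Apply the identity $-\inner{a}{b}=\frac12(\norm{a-b}-\norm{a}-\norm{b})$ with $a=\nabla F(\theta^t)$ and $b$ the cluster-averaged local gradient. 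This produces $-\frac{K\eta}{2}\norm{\nabla F(\theta^t)}$ plus a term $\frac{K\eta}{2}\norm{\nabla F(\theta^t)-b}$. Bound the latter by $K\eta\lambda_{m(t)}^2$ (Assumption \ref{ass:niid}) plus $K\eta$ times the averaged drift $\frac{1}{N_{m(t)}}\sum_n\norm{\nabla f_n(\theta^t_{n,k})-\nabla f_n(\theta^t)}$, using $\norm{x+y}\le 2\norm{x}+2\norm{y}$. The factor 2 here is where the coefficients $2\lambda^2$ and $4/(K\eta T)$ come from. Keep the $-\norm{b}$ term in reserve; it will absorb part of the second-order term.

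\textbf{Drift and second-order terms.} I need $L$-smoothness of each $f_n$, which I take as implicit in Assumption \ref{ass:lsmooth}. With it, $\norm{\nabla f_n(\theta^t_{n,k})-\nabla f_n(\theta^t)}\le L^2\norm{\theta^t_{n,k}-\theta^t}$. Since $\theta^t_{n,k}-\theta^t=-\eta\sum_{j<k}\tilde{g}^t_{n,j}$, Jensen and the $G^2$ bound give $\E{\norm{\theta^t_{n,k}-\theta^t}}\le \eta^2k^2G^2$. Averaging over $k$ with $\frac1K\sum_{k<K}k^2\le K^2/3$ yields the $\frac{L^2K^2\eta^2G^2}{3}$ term; after doubling, this is the stated $\frac{4}{3}$ coefficient.

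For $\E{\norm{\Delta^t}}$, separate mean and noise. The noise is independent across the $N_{m(t)}$ clients and conditionally zero-mean across steps (martingale differences). Its second moment is therefore at most $\frac{K\sigma^2}{N_{m(t)}}$, which produces $\frac{L\eta\sigma^2}{N_{m(t)}}$ after normalization. The mean part is at most $K\norm{b}$-type quantities. Under $LK\eta<1$ these are dominated by the reserved $-\frac{K\eta}{2}\norm{b}$ and can be dropped.

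\textbf{Telescoping and main obstacle.} Combining the pieces gives
\[
\frac{K\eta}{4}\E{\norm{\nabla F(\theta^t)}}\le \E{F(\theta^t)}-\E{F(\theta^{t+1})}+\text{(error terms)} .
\]
Sum over $t$, use $F(\theta^T)\ge F^*$, and divide by $K\eta T/4$ to obtain \eqref{eq:th1}. The main obstacle is the bookkeeping in the second-order term. The mean part of $\Delta^t$ involves local gradients at drifted points rather than $b$ itself, so it must be re-split and absorbed using $LK\eta<1$ without inflating the constants. I would first try absorbing it entirely with the reserved negative term. If that fails, I would bound it crudely by $K^2G^2$ and fold it into the $G^2$ term, accepting a slightly different constant.
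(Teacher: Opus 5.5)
Your route differs from the paper's at the inner-product term, and it is actually the better-matched decomposition. The paper splits $-\langle\nabla F(\theta^t),\tilde g_{n,k}^t\rangle$ three ways. The drift piece goes through Young's inequality (Lemma 1, costing $\tfrac14\|\nabla F(\theta^t)\|^2$) and the heterogeneity piece through polarization at $\theta^t$ (Lemma 2). That leaves a descent coefficient $\tfrac{K\eta}{4}$ and a reserved term $-\tfrac{\eta}{2}\sum_k\|\nabla F_{m(t)}(\theta^t)\|^2$.

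You instead polarize once against $b_k=\tfrac{1}{N_{m(t)}}\sum_n\nabla f_n(\theta_{n,k}^t)$ at the drifted iterates, and only then split $\|\nabla F(\theta^t)-b_k\|^2$ into heterogeneity and drift with the factor $2$. This doubles the descent coefficient to $\tfrac{K\eta}{2}$. More importantly, your reserved term $-\tfrac{\eta}{2}\sum_k\|b_k\|^2$ is exactly what Jensen produces from the mean part $\sum_k b_k$ of $\Delta^t$, so the absorption under $LK\eta<1$ is legitimate. The paper's absorption only closes because its Jensen step for the second-order term silently replaces $\tfrac{1}{N_{m(t)}}\sum_n\mathbb{E}[\tilde g_{n,k}^t]$ by $\nabla F_{m(t)}(\theta^t)$, which is false for $k\ge1$. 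So the ``main obstacle'' you anticipate does not arise. Your fallback would not have worked anyway: bounding $\mathbb{E}\|\Delta^t\|^2$ by $K^2G^2$ produces an $LK\eta G^2$ term, first order in $\eta$, which cannot be folded into $\tfrac43L^2K^2\eta^2G^2$.

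Two repairs are still needed. First, your constants are inconsistent. Nothing in your argument turns your $\tfrac{K\eta}{2}$ into the $\tfrac{K\eta}{4}$ of your telescoping line, and normalizing by $\tfrac{4}{K\eta}$ would give $4\lambda_{m(t)}^2$, which is too large. Divide by $\tfrac{K\eta}{2}$ throughout. This gives $\tfrac{2}{K\eta T}(F(\theta^0)-F^*)$, $2\lambda_{m(t)}^2$, $\tfrac{L\eta\sigma^2}{N_{m(t)}}$, and $\tfrac23L^2K^2\eta^2G^2$ from the drift.

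Second, ``separating mean and noise'' is not an orthogonal split. Let $e_j=\tfrac{1}{N_{m(t)}}\sum_n(\tilde g_{n,j}^t-\nabla f_n(\theta_{n,j}^t))$. The martingale structure kills $\mathbb{E}\langle e_j,e_k\rangle$ for $j\neq k$ and $\mathbb{E}\langle e_j,b_k\rangle$ for $j\ge k$. It does not kill $\mathbb{E}\langle e_j,b_k\rangle$ for $j<k$, because $b_k$ depends on $e_j$ through $\theta_{n,k}^t$. The paper's first equality for this term makes the same omission. The cruder bound $\|x+y\|^2\le2\|x\|^2+2\|y\|^2$ would force $LK\eta\le\tfrac12$.

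Instead, subtract the round-start vector $\nabla F_{m(t)}(\theta^t)$ and use Cauchy--Schwarz: $\mathbb{E}\langle e_j,b_k\rangle=\mathbb{E}\langle e_j,b_k-\nabla F_{m(t)}(\theta^t)\rangle\le Lk\eta G^2$. This holds because the noise second moment is at most $G^2$ and $\mathbb{E}\|b_k-\nabla F_{m(t)}(\theta^t)\|^2\le L^2k^2\eta^2G^2$. Summed over $j<k$, the cross term adds at most $\tfrac13L^2K^3\eta^3G^2$ per round. This uses exactly your slack: $\tfrac{2}{K\eta}\cdot\tfrac23L^2K^3\eta^3G^2=\tfrac43L^2K^2\eta^2G^2$.

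With these fixes your argument proves the theorem, with strictly better constants on the initialization and variance terms. As you note, it needs each $f_n$ to be $L$-smooth, which the paper's Lemma 1 also uses tacitly.
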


\begin{proof}
    For clarity in the proof, we first present the key lemmas to be employed, followed by the formal derivation.
    \begin{lemma} \label{lemma:1}
        Under Assumption \ref{ass:lsmooth}, it follows that
        \begin{equation} \label{eq:lemma:1}
            \begin{split}
                \E{\inner{\nabla F(\theta^t)}{\nabla F_{m(t)}(\theta^t)-\tilde{g}_{n,k}^t}} \qquad \qquad \qquad \qquad  \\
                \leq \frac{1}{4} \E{\norm{\nabla F(\theta^t)}} + \frac{L^2}{N_{m(t)}}\sum_{n \in \mathcal{N}_{m(t)}} \norm{\theta^t - \theta_{n,k}^t}
            \end{split}
        \end{equation}
    \end{lemma}

    \begin{lemma} \label{lemma:2}
        Under Assumption \ref{ass:niid}, it follows that
        \begin{equation} \label{eq:lemma:2}
            \begin{split}
                \mathbb{E}&\Big[\inner{\nabla F(\theta^t)}{\nabla F(\theta^t) - \nabla F_{m(t)}(\theta^t)} \Big] \\
                \leq& \frac{1}{2} \E{\norm{\nabla F(\theta^t)}} - \frac{1}{2} \E{\norm{\nabla F_{m(t)}(\theta^t)}} + \frac{1}{2}\lambda_{m(t)}^2 
            \end{split}
        \end{equation}
    \end{lemma}

    \begin{lemma} \label{lemma:3}
        Under Assumption \ref{ass:gradient}, it follows that
        \begin{equation} \label{eq:lemma:3}
            \frac{1}{N_{m(t)}}\sum_{n \in \mathcal{N}_{m(t)}} \norm{\theta^t - \theta_{n,k}^t} \leq k^2 \eta^2 G^2
        \end{equation}
    \end{lemma}

    According to Assumption \ref{ass:lsmooth}, if $F(\cdot)$ is L-smooth, then we have the following inequality:
    \begin{align}
        \mathbb{E} \big[ & F(\theta^{t+1})-F(\theta^{t}) \big] \nonumber \\
        & \leq \E{\inner{\nabla F(\theta^t)}{\theta^{t+1} - \theta^{t}}} + \frac{L}{2} \E{\norm{\theta^{t+1} - \theta^{t}}} \\
        \begin{split}
            & = -\frac{\eta}{N_{m(t)}} \sum_{n \in \mathcal{N}_{m(t)}} \sum_{k=0}^{K-1} \E{\inner{\nabla F(\theta^t)}{\tilde{g}_{n,k}^t}} \\
            & \qquad \qquad \quad + \frac{L\eta^2}{2{N_{m(t)}}^2} \E{\norm{\sum_{n \in \mathcal{N}_{m(t)}} \sum_{k=0}^{K-1} \tilde{g}_{n,k}^t}} 
        \end{split}
        \label{eq:lsmooth}
    \end{align}
    
    First, we bound the first term in Eq. \eqref{eq:lsmooth}. Through the decomposition of inner products, we have
    \begin{align} \label{eq:th1pf1}
        \begin{split}
            &-\frac{\eta}{N_{m(t)}} \sum_{n \in \mathcal{N}_{m(t)}} \sum_{k=0}^{K-1} \E{\inner{\nabla F(\theta^t)}{\tilde{g}_{n,k}^t}} \\
            =&\frac{\eta}{N_{m(t)}} \sum_{n} \sum_{k} \mathbb{E} \Big[ \inner{\nabla F(\theta^t)}{\nabla F_{m(t)}(\theta^t) - \tilde{g}_{n,k}^t} \\
            + & \left\langle \nabla F(\theta^t), \nabla F(\theta^t) - \nabla F_{m(t)}(\theta^t) \right\rangle - \norm{\nabla F(\theta^t)} \Big]
        \end{split}
    \end{align}
    Based on the conclusions of Lemma \ref{lemma:1} and Lemma \ref{lemma:2}, we substitute Eq. \eqref{eq:lemma:1} and \eqref{eq:lemma:2} into Eq. \eqref{eq:th1pf1}, yielding:
    \begin{equation}
        \begin{split}
            &- \frac{\eta}{N_{m(t)}}  \sum_{n \in \mathcal{N}_{m(t)}} \sum_{k=0}^{K-1} \E{\inner{\nabla F(\theta^t)}{\tilde{g}_{n,k}^t}} \\
            \leq & -\frac{K\eta}{4} \E{\norm{\nabla F(\theta^t)}} - \frac{\eta}{2} \sum_{k} \E{\norm{\nabla F_{m(t)}(\theta^t)}} \\
            & + \frac{K\eta}{2} \lambda_{m(t)}^2 + \frac{L^2 \eta}{{N_{m(t)}}^2} \sum_{n} \sum_{k} \sum_{n} \norm{\theta^t - \theta_{n,k}^t} 
        \end{split} \label{eq:1term}
    \end{equation}
    
    Next, we bound the second term in Eq. \eqref{eq:lsmooth}. We have
    \begin{align}
        & \frac{L\eta^2}{2{N_{m(t)}}^2} \E{\norm{\sum_{n \in \mathcal{N}_{m(t)}} \sum_{k=0}^{K-1} \tilde{g}_{n,k}^t}} \nonumber \\
        \begin{split}
            &\quad \overset{\text{\ding{192}}}{=}  \frac{L\eta^2}{2{N_{m(t)}}^2} \sum_{n} \sum_{k} \E{\norm{  \tilde{g}_{n,k}^t - \E{\tilde{g}_{n,k}^t} }} \\
            &\quad \qquad + \frac{LK^2\eta^2}{2} \E{\norm{ \frac{1}{N_{m(t)}K} \sum_{n} \sum_{k} \E{\tilde{g}_{n,k}^t} }}
        \end{split}
        \\
        &\quad \overset{\text{\ding{193}}}{\leq} \frac{LK\eta^2 \sigma^2}{2N_{m(t)}} + \frac{LK\eta^2}{2} \sum_{k} \E{\norm{ \nabla F_{m(t)}(\theta^t)}} \label{eq:2term}
    \end{align}
    where \ding{192} holds due to the fact that $\mathbb{E} \norm{X} = \mathbb{E}\norm{X - \mathbb{E} X} + \norm{ \mathbb{E}X} $ and the zero-mean property of $\tilde{g}_{n,k}^t - \mathbb{E}\tilde{g}_{n,k}^t$, \ding{193} follows by applying Assumption \ref{ass:gradient} and Jensen's Inequality.
    
    With upper bounds established for both the first and second terms in Eq. \eqref{eq:lsmooth}, we can substitute Eq. \eqref{eq:1term} and \eqref{eq:2term} back into Eq. \eqref{eq:lsmooth}, resulting in
    \begin{align} \label{eq:th1pf2}
        \begin{split}
            & \mathbb{E} \big[ F (\theta^{t+1})-F(\theta^{t}) \big] \leq -\frac{K\eta}{4} \E{\norm{\nabla F(\theta^t)}} \\
            & + \frac{K\eta}{2} \lambda_{m(t)}^2  + \frac{L^2 \eta}{{N_{m(t)}}^2} \sum_{n} \sum_{k} \sum_{n} \norm{\theta^t - \theta_{n,k}^t} \\
            & + \frac{LK\eta^2 \sigma^2}{2N_{m(t)}} + \frac{\eta}{2}\left( LK\eta - 1 \right) \sum_{k} \E{\norm{ \nabla F_{m(t)}(\theta^t)}}
        \end{split}
    \end{align}
    Since $LK\eta < 1$, the term of $\nabla F_{m(t)}(\theta^t)$ in Eq. \eqref{eq:th1pf2} can be omitted. According to Lemma \ref{lemma:3}, we can substitute Eq. \eqref{eq:lemma:3} into Eq. \eqref{eq:th1pf2} and obtain
    \begin{equation}
        \begin{split} 
        \mathbb{E} \big[ F (\theta^{t+1}) & - F(\theta^{t}) \big] \leq -\frac{K\eta}{4} \E{\norm{\nabla F(\theta^t)}} \\
        &+ \frac{K\eta}{2} \lambda_{m(t)}^2 + \frac{LK\eta^2 \sigma^2}{2N_{m(t)}} + \frac{L^2 K^3 \eta^3 G^2}{3}
        \end{split}
        \label{eq:th1pf3}
    \end{equation}
    which follows from the fact that $\sum_{k=0}^{K-1}k^2 \leq \frac{K^3}{3}$.
    
    Consider the time average of $\frac{1}{T}\sum_{t=0}^{T-1}[\cdot]$ on both sides of Eq. \eqref{eq:th1pf3} and denote the optimum of $F(\theta^T)$ as $F^*$, we have
    \begin{align}
        \begin{split}
            \frac{1}{T} & \mathbb{E}[F^* - F(\theta^0)] \leq -\frac{K\eta}{4T} \sum_{t=0}^{T-1} \E{\norm{\nabla F(\theta^t)}} \\
            &+ \frac{K\eta}{2T} \sum_{t=0}^{T-1} \lambda_{m(t)}^2 + \frac{1}{T} \sum_{t=0}^{T-1} \frac{LK\eta^2 \sigma^2}{2N_{m(t)}} + \frac{L^2 K^3 \eta^3 G^2}{3}
        \end{split}
    \end{align}
    which leads to the final conclusion.
\end{proof}

\begin{remark}
    Theorem \ref{th:conv} clearly demonstrates how EdgeFLow affects convergence performance by replacing the central server with sequential model migration at the edge level. The upper bound in Eq. \eqref{eq:th1} consists of four key terms: 1) the initialization gap between the starting and optimal objectives, 2) the data heterogeneity bias induced by non-IID client distributions, 3) the gradient variance resulting from model aggregations, and 4) the training error from stochastic mini-batch sampling. Additionally, other hyperparameters may also impact convergence behavior, such as the local epoch $K$ and the cluster size $N_{m(t)}$.
\end{remark}

The theoretical result extends the applicability of classical FL convergence analysis, particularly regarding our proposed cluster-based data heterogeneity bound $\lambda_{m(t)}^2$. Two special cases require attention:
\begin{itemize}
    \item \textbf{IID Scenario}: For any cluster $m(t)$, we may approximate $\nabla F(\theta^t) = \nabla F_{m(t)}(\theta^t)$, i.e., $\lambda_{m(t)}^2 = 0$. The convergence bound thus simplifies to
    \begin{equation}
        \mathcal{O} \left( \frac{F(\theta^0) - F^*}{K\eta T}+  \frac{L\eta\sigma^2}{N_{m(t)}} + L^2K^2\eta^2G^2 \right)
        \label{eq:th1iid}
    \end{equation} 
    which is consistent with established literature \cite{yu2019parallel}.
    \item \textbf{Traditional FL Scenario}: In the traditional setting, all participating clients can be viewed as forming a temporary cluster during each training round. Unlike EdgeFLow where the number of clusters $m$ remains fixed, and thereby enables better control over the range of $\lambda_{m(t)}^2$, the traditional FL approach suffers from indeterminate cluster counts and greater randomness, consequently making control over the bound of $\lambda_{m(t)}^2$ more difficult.
\end{itemize}

\section{Experiment Results} \label{sec:exp}
In this section, we present experimental evaluations, assessing the accuracy improvements and communication efficiency of EdgeFLow across multiple task configurations. The evaluation provides empirical validation of our theoretical analysis.

\subsection{Basic Settings}
Our experiments utilize the FashionMNIST as well as CIFAR-10 benchmark datasets for 10-class image classification. The learning model implements a six-layer convolutional neural network (CNN) with $3 \times 3$ kernels. Each convolutional layer incorporates batch normalization, while $2 \times 2$ max-pooling operations follow every second convolutional layer to reduce spatial dimensions. The architecture concludes with two fully connected layers with $(128, 10)$ output features for classification, trained end-to-end using cross-entropy loss and the Adam optimization algorithm.

The FL system organizes $N=100$ local clients into $M$ fixed clusters, i.e., $N_{m(t)}=\frac{100}{M}$ clients per cluster. In contrast, FedAvg, serving as the conventional FL baseline, randomly samples $N_{m(t)}$ clients every training round without maintaining persistent cluster memberships. Selected clients perform $K$ local epochs per communication round using a mini-batch size of $64$. 

The client data follows two distinct distribution settings, \textit{IID} and \textit{$x\%$-non-IID}. For the \textit{IID} setting, each client is randomly and uniformly assigned image samples from all $10$ categories. For the \textit{$x\%$-non-IID} setting, each client's dataset contains one or two major categories (accounting for $x\%$ of samples). Specifically, our experiments employ: 
\begin{enumerate}
    \item IID configuration consisting of:
    \begin{itemize}
        \item $100$ clients with \textit{IID} setting.
    \end{itemize} 
    \item NIID A configuration consisting of:
    \begin{itemize}
        \item $10$ clients with \textit{IID} setting;
        \item $20$ clients with \textit{$95\%$-non-IID} setting;
        \item $70$ clients with \textit{$98\%$-non-IID} setting.
    \end{itemize} 
    \item NIID B configuration consisting of:
    \begin{itemize}
        \item $10$ clients with \textit{IID} setting;
        \item $90$ clients with \textit{$100\%$-non-IID} setting.
    \end{itemize} 
\end{enumerate} 
These three distribution configurations are visually depicted in Fig. \ref{fig:config}.
\begin{figure}[htbp]
    \centering
    \subfigure[IID]{
    \label{fig:iid}
    \includegraphics[width=0.31\columnwidth]{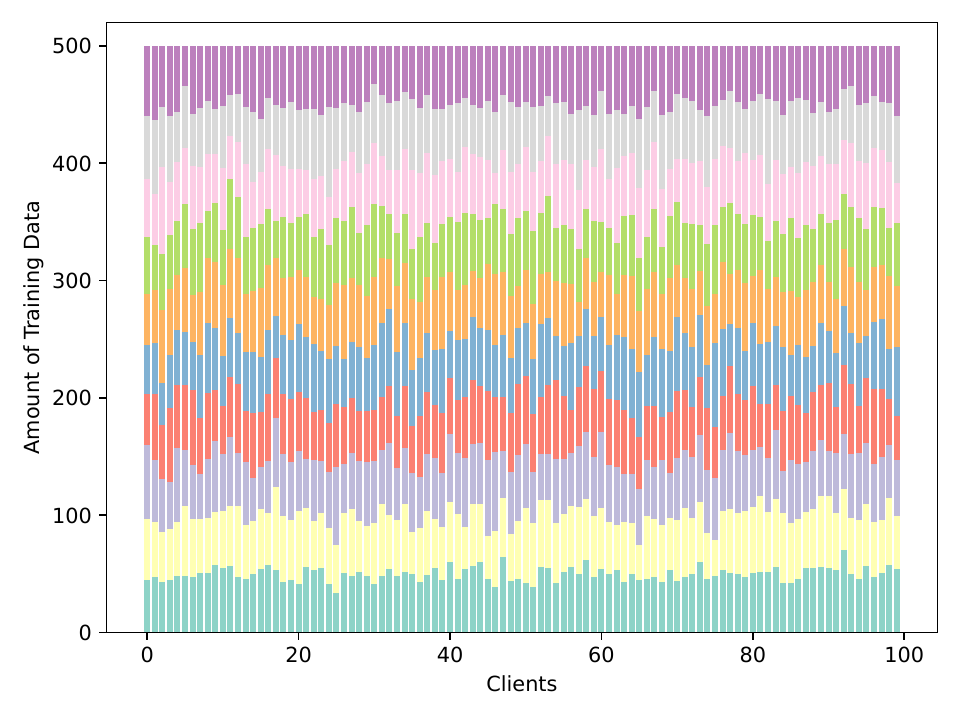}}
    \subfigure[NIID A]{
    \label{fig:niida}
    \includegraphics[width=0.31\columnwidth]{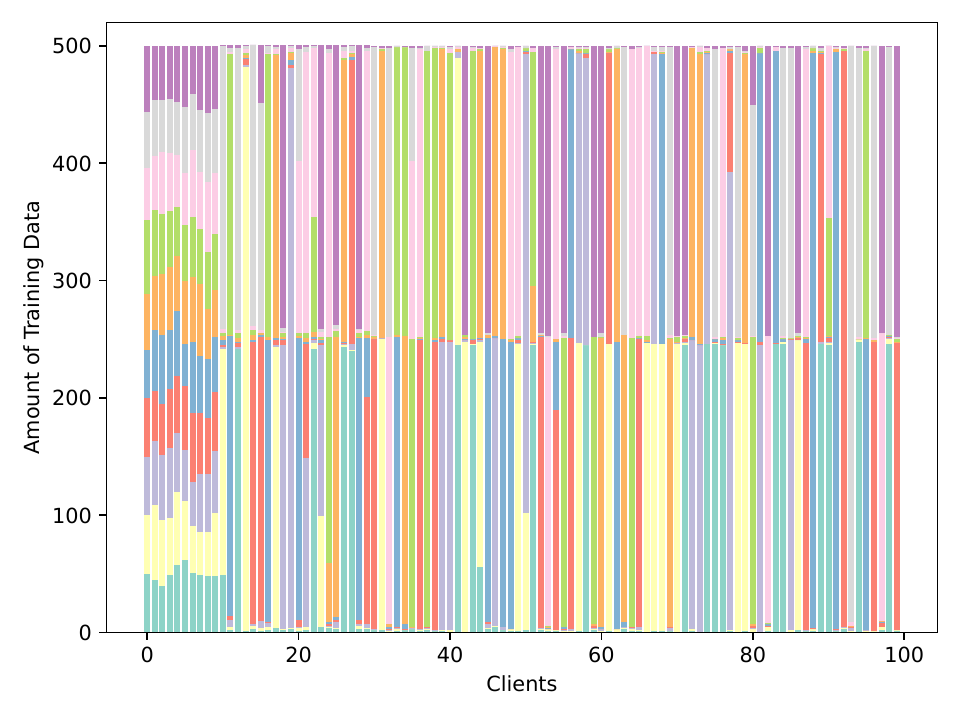}}
    \subfigure[NIID B]{
    \label{fig:niidb}
    \includegraphics[width=0.31\columnwidth]{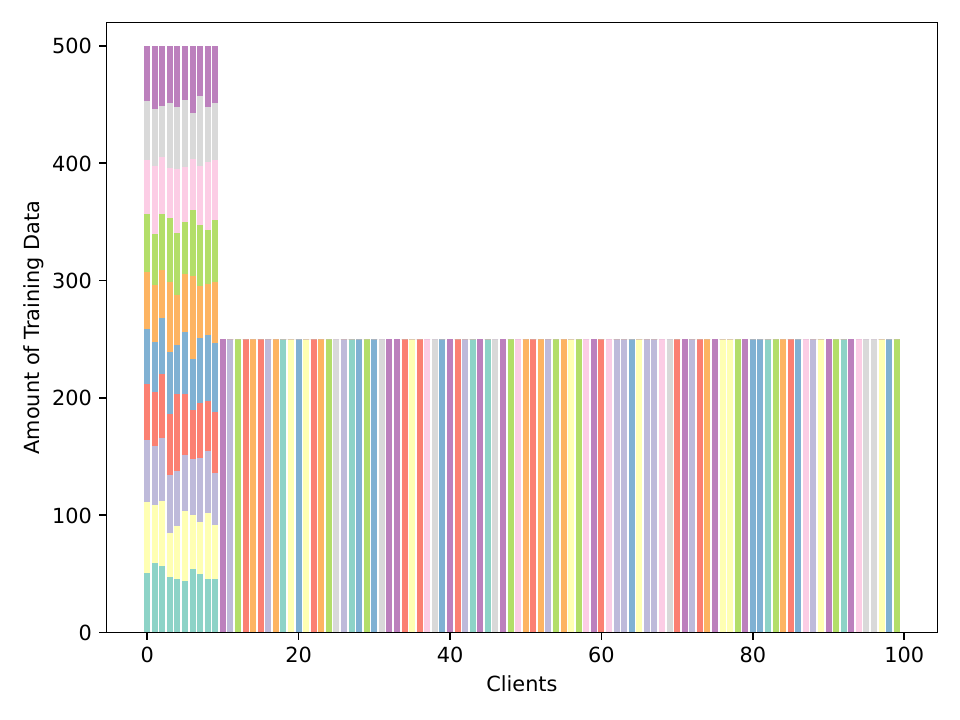}}
    \caption{Comparative demonstration of three data configuration paradigms. NIID A features distribution skewness, whereas NIID B features both distribution and quantity skewness.}
    \label{fig:config}
\end{figure}

\subsection{Accuracy Performance}
We evaluate the accuracy performance under two datasets with three configurations. In addition to the baseline experiments \textit{FedAvg}, we designed two variants of EdgeFLow: \textit{EdgeFLowRand} and \textit{EdgeFLowSeq}. The key distinction lies in the cluster selection strategy after each training round, EdgeFLowRand randomly selects the next cluster, whereas EdgeFLowSeq follows a predetermined fixed sequence.

In our experiments, we set $N_{m(t)}=10$ and $K=5$. The detailed results are presented in Table \ref{tab}. It can be observed that under the IID configuration, as the convergence bound degenerates to Eq. \eqref{eq:th1iid}, the performance difference between EdgeFLow and FedAvg is marginal. However, under the NIID configurations, EdgeFLow achieves consistent improvements across both datasets, with particularly notable enhancements observed on the more complex dataset (CIFAR-10).

\begin{table}[hb]
    \caption{Results of Accuracy Performance ($\%$)}
    \label{tab}
    \centering
    \begin{tabular}{cccccc}
        \toprule
        {\multirow{3}{*}{Method}} & \multicolumn{2}{c}{FashionMNIST} & \multicolumn{3}{c}{CIFAR-10} \\
        \cmidrule(l){2-3} \cmidrule(l){4-6}
         & IID & NIID A & IID & NIID A & NIID B\\
        \midrule
        FedAvg & \textbf{90.60} & 86.89 & 88.66 & 77.04 & 71.04 \\
        EdgeFLowRand & 90.13 & \textbf{87.97} & \textbf{89.16} & 80.26 & 73.14 \\
        EdgeFLowSeq & 90.53 & 87.50 & 88.99 & \textbf{81.58} & \textbf{73.36} \\
        \bottomrule
    \end{tabular}
\end{table}

In addition, Fig. \ref{fig:plot} illustrates EdgeFLow's performance evolution under NIID B configuration across varying cluster sizes $N_m$ and local epochs $K$. Fig. \ref{fig:n} demonstrates that larger $N_m$ values lead to simultaneously faster convergence and higher accuracy, consistent with Theorem \ref{th:conv}'s theoretical predictions. Meanwhile, Fig. \ref{fig:k} reveals that increasing $K$ does not yield proportional improvements in model performance, since $K$ appears in both the numerator and denominator terms in Eq. \eqref{eq:th1} of Theorem \ref{th:conv}. This dual presence results in a non-monotonic convergence bound, necessitating empirical methods to determine the optimal local epoch setting.

\begin{figure}[thb]
    \centering
    \subfigure[Cluster size $N_m$]{
    \label{fig:n}
    \includegraphics[width=0.44\columnwidth]{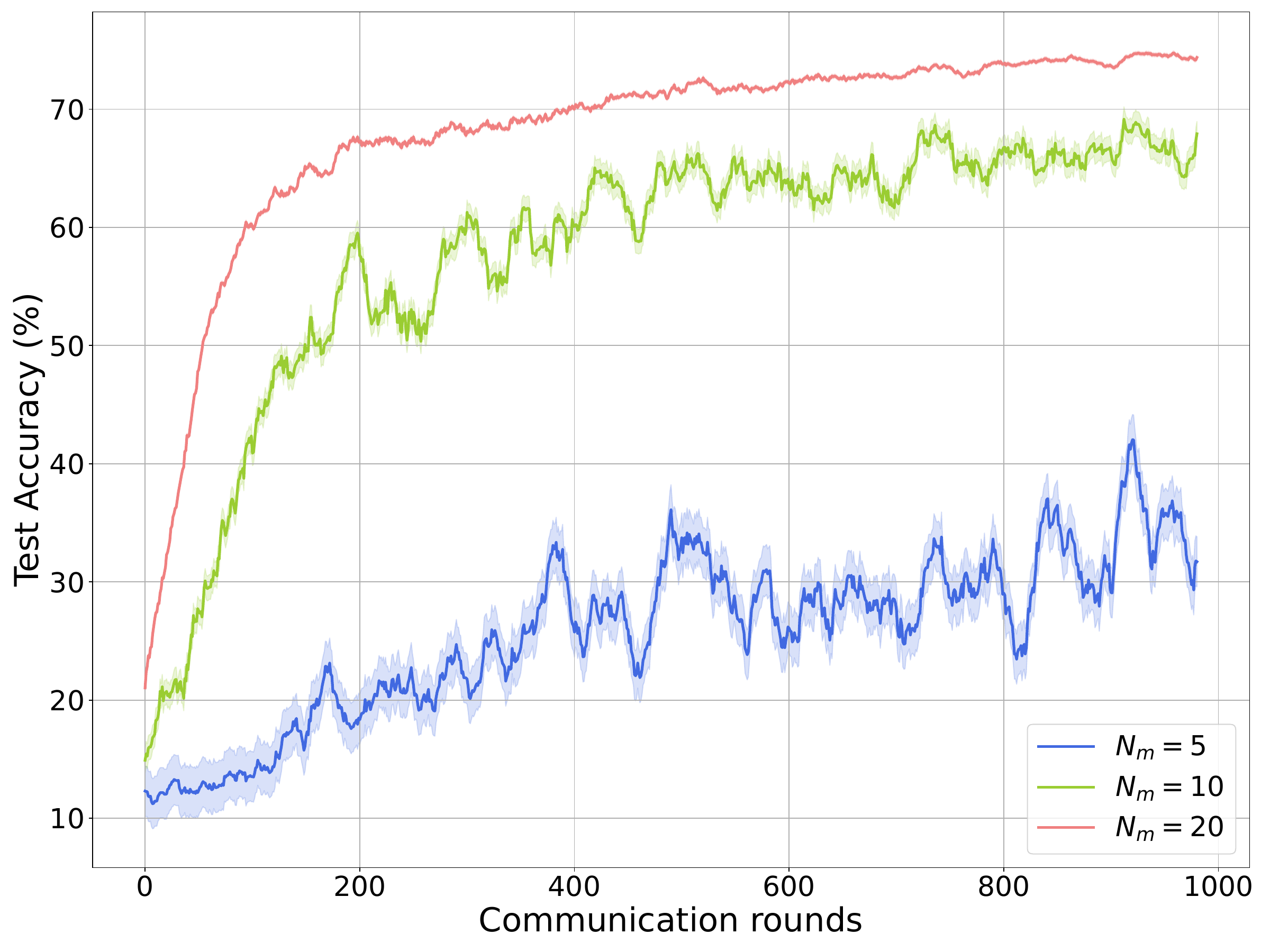}}
    \subfigure[Local epoch $K$]{
    \label{fig:k}
    \includegraphics[width=0.44\columnwidth]{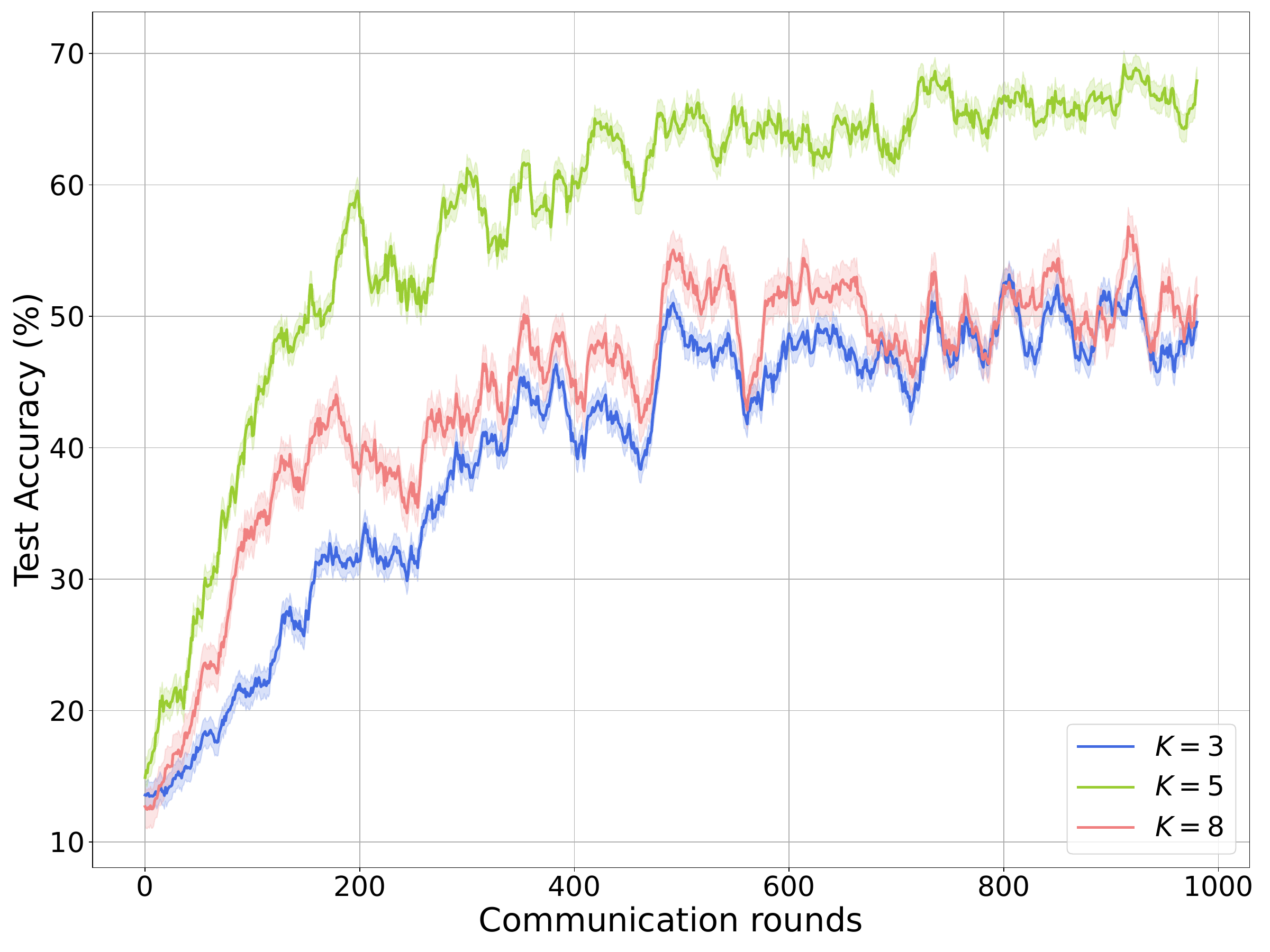}}
    \caption{Impact of hyperparameter variations on EdgeFLow. Accuracy curves are smoothed with a sliding window for visualization.}
    \label{fig:plot}
\end{figure}

\subsection{Communication Savings}
We further evaluated EdgeFLow's communication efficiency. The communication load is measured by the count of parameters uploaded per round, and the compression ratio is defined as the amount of data EdgeFLow needs to transmit divided by the original data transmission amount, where a lower value indicates better compression effectiveness. Specifically, FedAvg and Hierarchical FL account for parameter transmission from local devices to the cloud server, and EdgeFLow considers transmission from local devices to the next base station. We analyzed four common edge network topologies: 1) simple (local-edge-cloud), 2) breadth-parallel, 3) depth-linear, and 4) hybrid breadth-depth complex structures. As shown in Fig. \ref{fig:load}, EdgeFLow demonstrates increasing communication efficiency gains as the topology grows more complex, especially when the distance between local devices and cloud servers increases, which leads to more transmission hops. Compared to "breadth-oriented" cases, EdgeFLow shows notably better optimization effects for such "depth-oriented" scenarios.
\begin{figure}[htbp]
    \centering
    \includegraphics[width=0.85\columnwidth]{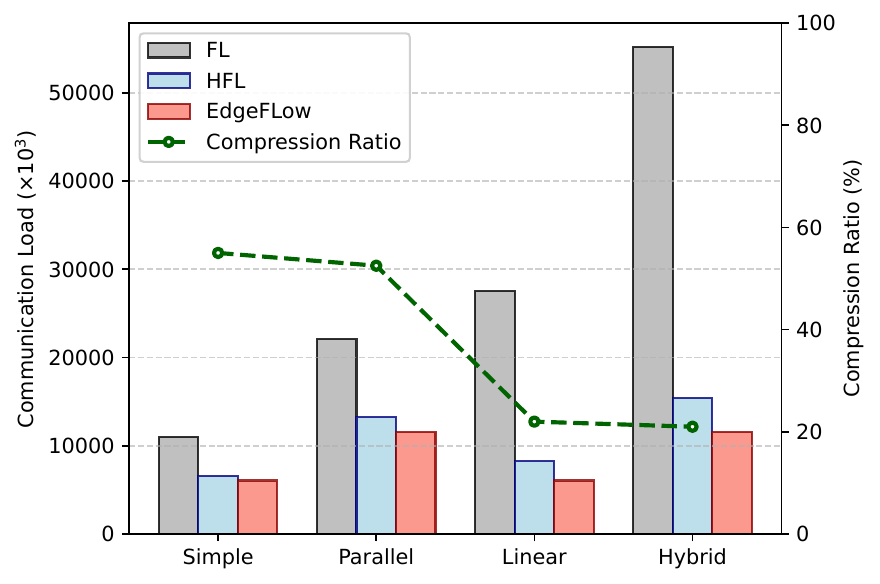}
    \caption{Communication load comparison across network structures.} 
    \label{fig:load}
\end{figure}

\section{Conclusion} \label{sec:con} 
In this work, we proposed EdgeFLow, a novel FL framework that replaces traditional cloud-centric aggregation with sequential model migration across edge networks. By eliminating the need for centralized cloud servers, EdgeFLow significantly reduces communication overhead while maintaining competitive model performance. Our convergence analysis demonstrated that EdgeFLow achieves provable guarantees under non-convex objectives and non-IID data distributions, extending classical FL theory to edge-centric topologies. Experimental results on benchmark datasets confirmed that EdgeFLow significantly reduces communication costs by 50-80\% compared to conventional FL approaches, while achieving comparable accuracy.

This work establishes a new direction for communication-efficient FL, proving that edge-based training can overcome the fundamental bottlenecks of cloud-dependent systems. Future research may explore dynamic cluster formation for adaptive edge learning and the integration with wireless-aware scheduling. EdgeFLow provides a foundational architecture for scalable FL in next-generation IoT and mobile computing systems, and it may also possess greater potential capabilities in terms of safety and personalization.

\bibliographystyle{IEEEtran}
\bibliography{ref}

\end{document}